\documentclass[review]{elsarticle}

\usepackage[margin=2.5cm]{geometry}
\usepackage{amsmath,amssymb,amsthm}
\usepackage{booktabs}
\newtheorem{proposition}{Proposition}
\usepackage{graphicx}
\usepackage{xcolor}
\usepackage{algorithm}
\usepackage{algpseudocode}
\usepackage{subcaption}
\usepackage{pifont}
\usepackage{hyperref}
\usepackage{cleveref}
\usepackage{tikz}
\usetikzlibrary{shapes.geometric, shapes.misc, arrows.meta, positioning, calc}

\journal{Computers and Geotechnics}

\begin{document}

\begin{frontmatter}

\title{Formal verification of tree-based machine learning models for lateral spreading}

\author{Krishna Kumar\corref{cor1}}
\ead{krishnak@utexas.edu}
\cortext[cor1]{Corresponding author}
\affiliation[ut]{organization={Oden Institute of Computational Engineering and Sciences, The University of Texas at Austin}, city={Austin}, state={Texas}, country={USA}}

\begin{abstract}
Machine learning models for geotechnical hazard prediction can achieve high accuracy while learning physically inconsistent relationships from sparse or biased training data.
Current remedies (post-hoc explainability such as SHAP and LIME, and training-time constraints) either diagnose individual predictions approximately or restrict model capacity without providing exhaustive guarantees.
This paper encodes trained tree ensembles as logical formulas in a Satisfiability Modulo Theories (SMT) solver and checks physical specifications across the entire input domain, not just sampled points.
Four geotechnical specifications (water table depth, PGA monotonicity, distance safety, and flat-ground safety) are formalized as decidable logical formulas and verified via SMT against both XGBoost ensembles and Explainable Boosting Machines (EBMs) trained on the 2011 Christchurch earthquake lateral spreading dataset (7{,}291 sites, four features).
The SMT solver either produces a concrete counterexample where a specification fails or proves that no violation exists.
Unconstrained XGBoost (82.5\% accuracy) violates all four specifications.
Monotone-constrained XGBoost (69.3\% accuracy) satisfies PGA monotonicity but still violates compound specifications that no single-feature constraint can express.
The unconstrained EBM (80.1\% accuracy) violates all four specifications.
Imposing monotone constraints on PGA alone fixes Specification~B but leaves the remaining three violated.
A fully constrained EBM with monotone constraints on all four features (67.2\% accuracy) satisfies three of four specifications (water table depth safety remains violated by a thin margin), demonstrating that iterative constraint application guided by verification can progressively improve physical consistency.
A Pareto analysis of 33 model variants reveals a persistent trade-off, as among the variants studied, no model achieves both $>$80\% accuracy and full compliance with the specification set.
SHAP analysis of specification counterexamples shows that the offending feature can rank last among four features, demonstrating that post-hoc explanations do not substitute for formal verification.
These results establish a verify-fix-verify engineering loop for deploying physically consistent ML models in safety-critical geotechnical applications, and provide a foundation for formal certification of geotechnical ML models before field deployment.
\end{abstract}

\begin{keyword}
formal verification \sep physical consistency \sep machine learning \sep lateral spreading \sep SMT solver \sep XGBoost \sep explainable boosting machine
\end{keyword}

\end{frontmatter}

%% ============================================================
\section{Introduction}
\label{sec:introduction}
%% ============================================================

Machine learning (ML) models increasingly predict earthquake-induced geotechnical hazards, including liquefaction triggering \citep{rateria-maurer-2022,geyin-2022,demir-sahin-2022} and lateral spreading \citep{hsiao-kumar-2024,durante-rathje-2021}.
These models capture complex, non-linear relationships among input features (peak ground acceleration (PGA), groundwater depth, distance to a free face, ground slope) that traditional empirical methods cannot represent \citep{zhang-2021}.
Ensemble methods like random forests \citep{breiman-2001} and gradient-boosted trees \citep{chen-guestrin-2016} achieve this by aggregating hundreds of weak predictors, each fitting a local partition of the feature space.
The aggregate captures interaction effects that no single decision tree could express, but it also obscures how individual features drive predictions.

This opacity creates a practical problem.
Models trained on sparse or biased geotechnical data can learn physically inconsistent relationships that remain invisible until deployment.
\citet{hsiao-kumar-2024} demonstrated this concretely.
Their XGBoost model, trained on the 2011 Christchurch earthquake lateral spreading dataset \citep{durante-rathje-2021}, predicted that the probability of lateral spreading \emph{decreased} as PGA increased in certain input regions, inverting the fundamental expectation that stronger shaking produces more ground displacement.
The model still achieved approximately 84\% overall accuracy because the non-physical pattern was confined to data-sparse input regions, precisely the regions where reliable predictions matter most.
This failure mode is not unique to one model or dataset.
\citet{maurer-sanger-2023} reviewed 75 AI liquefaction publications and identified systematic deficiencies (lack of comparison to state-of-practice models, departure from ML best practices, models discussed but not provided) that undermine trust in ML predictions across the field.
This paper addresses a complementary deficiency, the absence of exhaustive model-assurance tools that can determine whether a trained model satisfies specific consistency conditions, regardless of how it was trained or benchmarked.

Post-hoc explainability methods offer one line of defense.
SHapley Additive exPlanations (SHAP) \citep{lundberg-lee-2017} and Local Interpretable Model-agnostic Explanations (LIME) \citep{ribeiro-2016} have proven valuable for geotechnical ML \citep{can-2021,hsiao-kumar-2024}, but both are approximate.
\citet{huang-marques-silva-2024} proved that SHAP can assign non-zero importance to irrelevant features and zero importance to relevant ones.
Post-hoc methods can \emph{suggest} the presence of failure, but they cannot \emph{certify} its absence.

Correcting non-physical relationships requires modifying the model.
Monotone constraints \citep{chen-guestrin-2016} and shape function editing \citep{hsiao-kumar-rathje-2025} improve physical consistency but operate on individual features, cannot express compound conditions (e.g., ``far from free face \emph{and} weak shaking should not predict high risk''), and provide no proof that the corrected model satisfies the intended property across its full input domain.

This paper closes that gap by introducing formal verification via Satisfiability Modulo Theories (SMT) solvers to geotechnical ML.
We encode trained XGBoost and Explainable Boosting Machine (EBM) tree ensembles as logical formulas in an SMT solver and check whether physical specifications hold across all possible inputs simultaneously.
Tree ensembles partition the input space into finitely many hyperrectangular regions, each assigned a constant output.
Negating a physical specification and inlining this partition yields a quantifier-free formula in linear real arithmetic that the solver decides exactly, returning either \texttt{sat} with a concrete counterexample or \texttt{unsat} (specification proven).
We use Z3 \citep{z3-2008}, though the approach applies to any QF\_LRA-capable solver.
The contribution is methodological, a model-assurance workflow demonstrating how formal verification complements training-time constraints, rather than proposing a new predictor.
As geotechnical ML models move toward operational deployment, formal certification that these models respect established specifications becomes essential for safe adoption in practice.

The contributions of this paper are as follows.
\begin{enumerate}
    \item The first application of SMT-based formal verification to geotechnical ML models, demonstrating that both XGBoost ensembles and Explainable Boosting Machines trained on lateral spreading data can be encoded as logical formulas and formally checked against specified geotechnical specifications.
    \item Formalization of four geotechnical specifications, spanning single-feature thresholds (A), monotonicity (B), and compound multi-feature conditions (C, D).
    Each specification reduces to a quantifier-free satisfiability query in linear real arithmetic (QF\_LRA) after inlining the tree-ensemble score.
    The specific thresholds are calibrated to the Christchurch dataset and can be adapted to other sites.
    The verification framework itself is general and accepts any threshold values that local engineers deem appropriate.
    \item A verify-fix-verify engineering loop that uses SMT counterexamples to diagnose violations, applies targeted monotone constraints, and re-verifies to confirm or reveal residual failures.
    \item A Pareto analysis of 33 model variants quantifying the accuracy-consistency trade-off. Among the variants studied, no model achieves both $>$80\% accuracy and full compliance with the specification set, establishing a concrete cost for physical consistency that practitioners can evaluate.
\end{enumerate}

%% ============================================================
\section{Background and related work}
\label{sec:background}
%% ============================================================

%% ------------------------------------------------------------
\subsection{Machine learning for lateral spreading}
\label{sec:bg-ml}
%% ------------------------------------------------------------

Lateral spreading, the permanent horizontal ground displacement toward a free face during earthquake-induced liquefaction, damages buried utilities, bridge foundations, and building footings in seismic regions.
Predicting where lateral spreading will occur requires relating subsurface soil conditions and seismic loading to surface displacement, a task complicated by the heterogeneity of natural deposits and the rarity of strong-motion events.

\citet{durante-rathje-2021} compiled 7{,}291 sites from the 2011 Christchurch earthquake, each characterized by four features (GWD, distance to free-face $L$, slope, PGA) and labeled via remote sensing, training XGBoost classifiers achieving approximately 80--84\% accuracy.
\citet{demir-sahin-2022} confirmed that boosted tree ensembles consistently outperform logistic regression and SVMs on this dataset.
\citet{chen-ni-2024} extended the approach to displacement magnitude prediction, and \citet{hsiao-rathje-kumar-2025} incorporated CPT profiles via autoencoder, improving accuracy above 83\%.

\citet{hsiao-kumar-2024} found that the XGBoost model assigned \emph{negative} SHAP values to PGA in certain regions, meaning increasing shaking \emph{decreased} predicted spreading probability.
This non-physical behavior arose from sparse training data in the low-PGA, high-distance region.
SHAP revealed the problem visually but could not quantify its extent or guarantee that a corrected model would be free of similar artifacts.

\citet{hsiao-kumar-rathje-2025} replaced XGBoost with an EBM \citep{nori-2019,lou-caruana-gehrke-2012}, a Generalized Additive Model (GAM) decomposing the prediction into univariate shape functions $f_i(x_i)$ and pairwise interactions $f_{ij}(x_i, x_j)$.
\begin{equation}
g\bigl(\mathbb{E}[y]\bigr) = \beta_0 + \sum_{i} f_i(x_i) + \sum_{i<j} f_{ij}(x_i, x_j)
\label{eq:ebm}
\end{equation}
where $g$ is the logit link function.
Each $f_i$ is learned by cycling through features in round-robin fashion, accumulating per-feature updates into piecewise-constant functions.
This structure makes each $f_i$ inherently interpretable, unlike XGBoost where tree splits entangle multiple features.
\citet{hsiao-kumar-rathje-2025} found the PGA shape function exhibited a dip near 0.42~g and replaced it with a monotonic curve, sacrificing approximately 10~pp of accuracy.

These studies establish that tree ensembles achieve high accuracy on lateral spreading but can learn non-physical relationships from sparse data, and current remedies are manual and local.
Whether a corrected model satisfies physical specifications \emph{everywhere}, not just at sampled inputs, has not been addressed.

%% ------------------------------------------------------------
\subsection{Embedding physical knowledge in ML models}
\label{sec:bg-piml}
%% ------------------------------------------------------------

Several approaches embed physical knowledge into ML models, each targeting a specific architecture and constraint type.
Physics-informed neural networks \citep{raissi-2019} embed governing PDEs into the loss function, but are architecturally incompatible with tree-based classifiers and remain limited in geotechnical applications \citep{kumar-2025-pinns,yuan-2025}.
The constraints relevant to lateral spreading (``if groundwater is deeper than 5~m, do not predict spreading'') are logical conditions, not differential equations, requiring different enforcement mechanisms.

\paragraph{Hard monotone constraints}
Both XGBoost and EBM support \texttt{monotone\_constraints} parameters that force each specified feature's marginal effect to be non-decreasing or non-increasing during tree construction.
These constraints prevent single-feature violations but cannot express compound conditions involving multiple features simultaneously, because monotone constraints operate within individual tree splits, which test one feature at a time.

\paragraph{Post-hoc shape function editing}
\citet{hsiao-kumar-rathje-2025} inspected EBM shape functions, identified non-physical segments visually, and replaced them with monotonic curves.
The editing is effective but manual.
The analyst must decide which segments to modify and how to handle pairwise interactions whose non-physical behavior may not be visually obvious.
No guarantee exists that the edited model is globally consistent.

%% ------------------------------------------------------------
\subsection{Formal verification of ML models}
\label{sec:bg-fv}
%% ------------------------------------------------------------

Formal verification determines whether a system satisfies a specification through exhaustive mathematical reasoning.
Unlike testing, which evaluates a finite sample of inputs, formal verification reasons over the model's symbolic structure to produce either a proof that a property holds universally or a concrete counterexample where it fails.
Applied to ML models, the question becomes whether the learned input-output mapping satisfies a given property for \emph{all} possible inputs.

\paragraph{SMT solvers}
Satisfiability Modulo Theories (SMT) solvers \citep{barrett-seshia-2021} extend Boolean satisfiability (SAT) with decidable procedures for reasoning about arithmetic and other theories.
An SMT solver takes a formula and returns either \texttt{sat} with a satisfying assignment or \texttt{unsat}.
For instance, given $x_1 + x_2 > 10 \;\land\; x_1 < 0$, the solver deduces $x_2 > 10$ and returns a satisfying assignment such as $x_1 = -1,\; x_2 = 12$; tree-ensemble queries have exactly this structure, with each leaf contributing a constant to the sum.
Physical specifications for tree ensembles are stated as universally quantified first-order formulas.
Verification negates each specification and inlines the tree-ensemble score as a nested If-Then-Else (ITE) expression.
The resulting solver queries are quantifier-free formulas in linear real arithmetic with ITE terms (Quantifier-Free Linear Real Arithmetic, QF\_LRA), because every tree split compares a feature $x_j$ against a threshold $\theta$ and the ensemble output is a sum of constant leaf weights.
The decision procedure for QF\_LRA is complete, meaning the solver always terminates.
We use Z3 \citep{z3-2008} as our verification engine, though the methodology applies to any QF\_LRA-capable solver.

\paragraph{Tree ensemble verification}
\citet{kantchelian-2016} pioneered Mixed Integer Linear Programming (MILP)-based tree ensemble verification for computing optimal adversarial examples.
\citet{chen-zhang-2019} recast robustness verification as a max-clique problem, achieving speedups over MILP.
\citet{devos-2021} developed Veritas for generic verification tasks beyond adversarial robustness, and \citet{tornblom-2020} built VoTE for safety-critical autonomous systems.
\citet{sato-2020} proposed methods to extract violation ranges for constructing input filters.
On the explanation side, \citet{ignatiev-2019} developed formal abductive explanations (subset-minimal feature sets guaranteeing a prediction), extended by \citet{ignatiev-2022} to MaxSAT encodings.
\citet{varshney-2026} combined MILP and SMT into a data-aware sensitivity framework scaling to 800-tree ensembles.

All prior work focused on adversarial robustness or generic sensitivity.
No study has applied formal verification to check domain-specific physical specifications, which is the contribution of this paper.
The key insight is that tree ensembles partition the input space into finitely many hyperrectangular regions, each assigned a constant output.
Negating a physical specification and inlining this partition yields a decidable QF\_LRA formula.

%% ------------------------------------------------------------
\subsection{Positioning: formal verification as complement to constraints}
\label{sec:bg-position}
%% ------------------------------------------------------------

Hard constraints \emph{prevent} violations during training by restricting the hypothesis space, but provide no diagnostic information about where the unconstrained model would have failed.
Formal verification \emph{audits} a trained model by checking whether its learned function satisfies specified properties, returning either a proof or a concrete counterexample.
These two capabilities compose into a verify-fix-verify loop.
\begin{enumerate}
    \item Train an unconstrained model, optimizing purely for accuracy.
    \item Verify physical specifications formally. The solver provides concrete counterexamples for each violation.
    \item Analyze counterexamples to determine which constraints or corrections are needed.
    \item Apply targeted fixes (monotone constraints, shape function editing, or retraining with modified hyperparameters) guided by the counterexample analysis.
    \item Re-verify to confirm compliance or to discover residual violations that require additional iteration.
\end{enumerate}
This loop produces models whose physical consistency is not assumed but \emph{proven}.
The key distinction from simply applying constraints a priori is that verification reveals whether the constraints are sufficient.
As we demonstrate in \Cref{sec:axiom-results}, monotone constraints fix Specification~B (single-feature monotonicity) but leave Specifications~A, C, and D (threshold and compound conditions) unsatisfied, a failure that would remain undetected without formal verification.

%% ============================================================
\section{Problem formulation}
\label{sec:formulation}
%% ============================================================

%% ------------------------------------------------------------
\subsection{Dataset}
\label{sec:dataset}
%% ------------------------------------------------------------

We use the 2011 Christchurch earthquake lateral spreading dataset \citep{durante-rathje-2021}, comprising 7{,}291 sites (42\% spreading, 58\% not) classified by remote sensing displacement measurements exceeding 0.3~m.
The four features (\Cref{tab:features}) each have clear physical relationships to lateral spreading that domain experts can articulate as specifications.

\begin{table}[htbp]
\centering
\caption{Input features for lateral spreading classification.}
\label{tab:features}
\begin{tabular}{llll}
\toprule
Feature & Description & Range & Unit \\
\midrule
GWD & Groundwater depth & 0.37--6.05 & m \\
$L$ & Distance to nearest free-face & 0--3.29 & km \\
Slope & Ground surface slope & 0--10.92 & \% \\
PGA & Peak ground acceleration & 0.33--0.57 & g \\
\bottomrule
\end{tabular}
\end{table}

%% ------------------------------------------------------------
\subsection{Physical specifications as logical formulas}
\label{sec:axioms}
%% ------------------------------------------------------------

We formalize four physical specifications that any ML lateral spreading model should respect.
The four specifications are chosen to exercise structurally different formula classes (single-feature threshold, monotonicity, and compound multi-feature conditions) so that the verification framework is tested against the range of constraint types that arise in practice.
Our contribution is the verification framework, not the particular rules.

Let the input vector be
\begin{equation}
\mathbf{x} = (g, \ell, s, p) \in \mathcal{X} \subseteq \mathbb{R}^4,
\end{equation}
where $g$ is groundwater depth (GWD), $\ell$ is distance to free face ($L$), $s$ is ground slope, and $p$ is peak ground acceleration (PGA).
Let $f\colon \mathcal{X} \to \mathbb{R}$ denote the model's logit (log-odds) output, so that the predicted class is $\hat{y}(\mathbf{x}) = \mathbf{1}[f(\mathbf{x}) > 0]$.
\Cref{tab:notation} summarizes the notation and logical symbols used throughout.
Each specification is a universally quantified first-order formula over the bounded domain $\mathcal{X}$.
Verification reduces each to a quantifier-free satisfiability query after inlining the model as an ITE expression (\Cref{sec:encoding}).
The specifications encode \emph{necessary} conditions from geotechnical domain knowledge.

\begin{table}[t]
\centering
\caption{Notation and logical symbols.}
\label{tab:notation}
\begin{tabular}{cl}
\toprule
Symbol & Meaning \\
\midrule
$\mathbf{x} = (g, \ell, s, p)$ & Input vector (GWD, distance, slope, PGA) \\
$\mathcal{X} \subseteq \mathbb{R}^4$ & Bounded input domain \\
$f(\mathbf{x})$ & Model logit (log-odds) output \\
$\hat{y}(\mathbf{x})$ & Predicted class: $\mathbf{1}[f(\mathbf{x}) > 0]$ \\
\midrule
$\forall$ & Universal quantification (``for all'') \\
$\exists$ & Existential quantification (``there exists'') \\
$\neg$ & Logical negation (``not'') \\
$\Rightarrow$ & Logical implication (``if \ldots\ then'') \\
$\land$ & Logical conjunction (``and'') \\
$\lor$ & Logical disjunction (``or'') \\
$\bigwedge$ & Conjunction over a set (``and'' for all elements) \\
$\in$ & Set membership (``belongs to'') \\
$\leq,\; >$ & Arithmetic comparison \\
\midrule
\texttt{sat} & Satisfiable: counterexample exists (specification violated) \\
\texttt{unsat} & Unsatisfiable: no counterexample (specification proven) \\
\bottomrule
\end{tabular}
\end{table}
Satisfying them does not guarantee correctness in all respects.
The specific thresholds are calibrated to the Christchurch dataset and encode site-specific engineering bounds rather than universal physical laws.
Practitioners applying this framework elsewhere should recalibrate thresholds to local conditions.

\paragraph{Specification A (Water table depth)}
Sites with deep groundwater should not be predicted as lateral spreading.
\citet{geyin-2022} found that surface manifestation of liquefaction becomes negligible for GWD above 3--4~m, and we adopt a conservative 5~m threshold.
\begin{equation}
\forall\, (g,\ell,s,p) \in \mathcal{X}:\quad g > 5.0 \;\Rightarrow\; f(g,\ell,s,p) \leq 0
\label{eq:axiom-a}
\end{equation}
\paragraph{Specification B (PGA monotonicity)}
Increasing PGA should not decrease predicted risk, a first-principles consequence of soil dynamics that holds regardless of other features.
\begin{equation}
\forall\, (g,\ell,s,p_1),\, (g,\ell,s,p_2) \in \mathcal{X}:\quad p_1 \leq p_2 \;\Rightarrow\; f(g,\ell,s,p_1) \leq f(g,\ell,s,p_2)
\label{eq:axiom-b}
\end{equation}
Per-feature monotone constraints can directly enforce Specification~B by requiring $p$ to have a non-decreasing marginal effect during tree construction.
The same mechanism cannot enforce Specification~A, because monotone constraints control the \emph{direction} of a feature's marginal effect but not the absolute output value at a given threshold.
A model with a monotonically decreasing $g$ effect can still produce a positive logit at $g = 5.01$~m if the combined contributions from $\ell$, $s$, and $p$ are sufficiently large.

\paragraph{Specification C (Distance safety)}
Sites far from a free face experiencing weak shaking should not be predicted as lateral spreading.
The 2.5~km threshold exceeds the maximum documented lateral spread extent in Christchurch, and the 0.35~g threshold falls below confirmed spreading PGA values (minimum 0.37~g).
\begin{equation}
\forall\, (g,\ell,s,p) \in \mathcal{X}:\quad (\ell > 2.5 \;\land\; p < 0.35) \;\Rightarrow\; f(g,\ell,s,p) \leq 0
\label{eq:axiom-c}
\end{equation}
This specification constrains two features simultaneously, and no single-feature monotone constraint can express the conjunction.
Only 4 training observations fall within the premise region, none exhibiting spreading.

\paragraph{Specification D (Flat-ground safety)}
Specification~D restricts the same safety conclusion to a smaller, more physically certain subregion by adding the requirement that the ground surface is essentially flat ($s < 0.1$\%), eliminating the gently-sloping-ground driving mechanism.
\begin{equation}
\forall\, (g,\ell,s,p) \in \mathcal{X}:\quad (s < 0.1 \;\land\; \ell > 2.5 \;\land\; p < 0.35) \;\Rightarrow\; f(g,\ell,s,p) \leq 0
\label{eq:axiom-d}
\end{equation}
Only 1 observation in the training data satisfies the premise, and it is non-spreading.
Specification~D is not logically independent of Specification~C.
Because its premise implies that of Specification~C ($s < 0.1 \land \ell > 2.5 \land p < 0.35 \Rightarrow \ell > 2.5 \land p < 0.35$), any model satisfying Specification~C automatically satisfies Specification~D.
Specification~D is therefore a weaker requirement, included as a diagnostic.
When a model violates Specification~C, Specification~D determines whether the violation persists even in the smaller flat-ground subregion where the driving conditions are weakest.
\Cref{tab:axiom-taxonomy} summarizes the four specifications and their structural classes.
Standard training-time monotone constraints can directly enforce only Specification~B.

\begin{table}[htbp]
\centering
\caption{Specification taxonomy. Training-time monotone constraints can directly enforce only the monotonicity class (B). Threshold (A) and compound (C, D) specifications require formal verification to check.}
\label{tab:axiom-taxonomy}
\begin{tabular}{clll}
\toprule
Spec. & Name & Type & Monotone constraint sufficient? \\
\midrule
A & Water table depth  & Single-feature threshold & No \\
B & PGA monotonicity   & Single-feature monotonicity & Yes \\
C & Distance safety    & Compound multi-feature & No \\
D & Flat-ground safety & Compound multi-feature & No \\
\bottomrule
\end{tabular}
\end{table}

We verify all four specifications over $\mathcal{X} = [0.37,\,6.05] \times [0,\,3.29] \times [0,\,10.5] \times [0.33,\,0.57]$ ($g \times \ell \times s \times p$), which rounds the observed data ranges to clean bounds and excludes two outliers (one with Slope $= 10.9$\%, one with PGA $= 0.329$~g), covering 7{,}289 of 7{,}291 observations (99.97\%).
Verification guarantees apply only within $\mathcal{X}$; model behavior outside these bounds is extrapolation.

%% ------------------------------------------------------------
\subsection{SMT encoding of tree ensembles}
\label{sec:encoding}
%% ------------------------------------------------------------

A gradient-boosted tree ensemble with $T$ trees computes the logit as
\begin{equation}
f(\mathbf{x}) = \beta_0 + \sum_{t=1}^{T} h_t(\mathbf{x})
\label{eq:ensemble}
\end{equation}
where $\beta_0$ is a base score and each tree $h_t$ maps features to a leaf weight via binary splits.
We encode each tree recursively as an SMT If-Then-Else (ITE) expression:
\begin{equation}
\textsc{Encode}(n) =
\begin{cases}
w_n & \text{if } n \text{ is a leaf with weight } w_n \\[4pt]
\texttt{If}\bigl(x_{j_n} < \theta_n,\; \textsc{Encode}(n.\text{left}),\; \textsc{Encode}(n.\text{right})\bigr) & \text{if } n \text{ is internal}
\end{cases}
\label{eq:encode-recursive}
\end{equation}
The full ensemble logit is $f(\mathbf{x}) = \beta_0 + \sum_{t=1}^{T} \textsc{Encode}(\text{root}_t)$.
For example, a depth-2 tree with root split on PGA at 0.42~g (\Cref{fig:tree-encoding-a}) becomes
\begin{equation}
h_1(g,\ell,s,p) = \texttt{If}\!\bigl(p < 0.42,\;
  \texttt{If}(s < 0.05,\; w_1,\; w_2),\;
  \texttt{If}(p < 0.48,\; w_3,\; w_4)\bigr)
\label{eq:tree-example}
\end{equation}
Each leaf corresponds to a conjunction of linear inequalities defining a polyhedral region of the input space (e.g., leaf $w_1$ is reached when $p < 0.42 \land s < 0.05$).
Summing $T$ such ITE expressions yields the ensemble logit $f(g,\ell,s,p) = \beta_0 + \sum_t h_t(g,\ell,s,p)$.
For our 100-tree, depth-5 model, this produces a single symbolic expression over four variables with 2{,}380 leaf terms.

\Cref{fig:tree-encoding} illustrates the encoding. Each root-to-leaf path becomes a conjunction of linear inequalities (left), and the full ensemble partitions the input space into regions of constant output (right, over $g \times p$), each corresponding to a unique leaf combination across all 100 trees.
Because this partition is finite, the solver reasons over all regions in a single query.

\begin{figure}[htbp]
\centering
\begin{subfigure}[b]{\textwidth}
    \centering
    \resizebox{\textwidth}{!}{%
    \begin{tikzpicture}[
        node distance=0.8cm and 1.0cm,
        split/.style={rectangle, draw, rounded corners=2pt, minimum width=1.5cm, minimum height=0.65cm, font=\small, align=center},
        leaf/.style={rectangle, draw, dashed, rounded corners=2pt, minimum width=0.9cm, minimum height=0.45cm, font=\scriptsize, align=center, fill=gray!5},
        proc/.style={rectangle, draw, rounded corners=3pt, minimum width=2.6cm, minimum height=1.0cm, font=\small, align=center},
        result/.style={rectangle, draw, rounded corners=3pt, minimum width=2.2cm, minimum height=0.65cm, font=\small\bfseries, align=center},
        arr/.style={-{Stealth[length=2.5mm]}, thick},
    ]
    % --- Tree (left) ---
    \node[split] (root) {PGA (g)\\[-1pt]$< 0.42$};
    \node[split, below left=0.5cm and 1.0cm of root] (n1) {Slope (\%)\\[-1pt]$< 0.05$};
    \node[split, below right=0.5cm and 1.0cm of root] (n2) {PGA (g)\\[-1pt]$< 0.48$};
    \node[leaf, below left=0.3cm and 0.1cm of n1] (l1) {$w_1$};
    \node[leaf, below right=0.3cm and 0.1cm of n1] (l2) {$w_2$};
    \node[leaf, below left=0.3cm and 0.1cm of n2] (l3) {$w_3$};
    \node[leaf, below right=0.3cm and 0.1cm of n2] (l4) {$w_4$};

    \draw[arr, green!60!black] (root) -- node[left, font=\scriptsize] {Yes} (n1);
    \draw[arr, red!70!black] (root) -- node[right, font=\scriptsize] {No} (n2);
    \draw[arr, green!60!black] (n1) -- (l1);
    \draw[arr, red!70!black] (n1) -- (l2);
    \draw[arr, green!60!black] (n2) -- (l3);
    \draw[arr, red!70!black] (n2) -- (l4);

    % Tree label
    \node[above=0.25cm of root, font=\small\itshape] {Tree 1 of 100};

    % --- SMT formula box ---
    \node[proc, right=5.5cm of root, fill=blue!5, draw=blue!40!black] (smt) {
        $f(\mathbf{x}) = \beta_0 + \displaystyle\sum_{t=1}^{100} \mathrm{ITE}_t(\mathbf{x})$\\[3pt]
        {\scriptsize each split $\to$ \textsf{If}$(x_i < \tau,\; \text{left},\; \text{right})$}
    };

    % --- Encode arrow (tree -> SMT box) ---
    \draw[arr, blue!60!black, line width=1.2pt] (root.east) -- (smt.west) node[midway, above, font=\small] {encode};

    % --- Negated axiom box ---
    \node[proc, below=1.3cm of smt, fill=violet!5, draw=violet!60!black] (axiom) {
        \textbf{Negated Specification B}\\[2pt]
        $\exists\,(g,\ell,s,p_1),(g,\ell,s,p_2) \in \mathcal{X}:$\\[1pt]
        $p_1 < p_2 \;\wedge\; f(g,\ell,s,p_2) < f(g,\ell,s,p_1)$
    };

    % --- Conjoin arrow (SMT formula -> Negated axiom) ---
    \draw[arr, violet, line width=1.2pt] (smt.south) -- (axiom.north) node[midway, right, font=\small, align=left] {conjoin $\neg$spec\\$\phi \;\wedge\; \neg$spec};

    % --- Z3 solver box ---
    \node[proc, below=1.0cm of axiom, fill=gray!10, draw=black] (z3) {SMT solver};

    % --- Arrow from negated axiom to Z3 ---
    \draw[arr, line width=1.2pt] (axiom.south) -- (z3.north);

    % --- Branch point below Z3 ---
    \coordinate[below=0.7cm of z3] (branch);
    \draw[arr, line width=1.2pt] (z3.south) -- (branch);

    % --- SAT / UNSAT outcomes ---
    \node[result, below left=0.5cm and 0.8cm of branch, fill=red!8, draw=red!60!black, text=red!70!black] (sat) {SAT $\to$ counter-example};
    \node[result, below right=0.5cm and 0.8cm of branch, fill=green!8, draw=green!50!black, text=green!50!black] (unsat) {UNSAT $\to$ spec.\ proven};

    \draw[arr, red!60!black] (branch) -- (sat.north);
    \draw[arr, green!50!black] (branch) -- (unsat.north);

    \end{tikzpicture}%
    }
    \caption{Verification pipeline. An XGBoost tree (left) is encoded as a nested SMT If-Then-Else expression. The negation of a physical specification is conjoined with the formula. If the solver returns SAT, it produces a concrete counter-example; if UNSAT, the specification is proven to hold over the \emph{entire} input domain.}
    \label{fig:tree-encoding-a}
\end{subfigure}

\vspace{1em}

\begin{subfigure}[b]{0.75\textwidth}
    \centering
    \includegraphics[width=\textwidth]{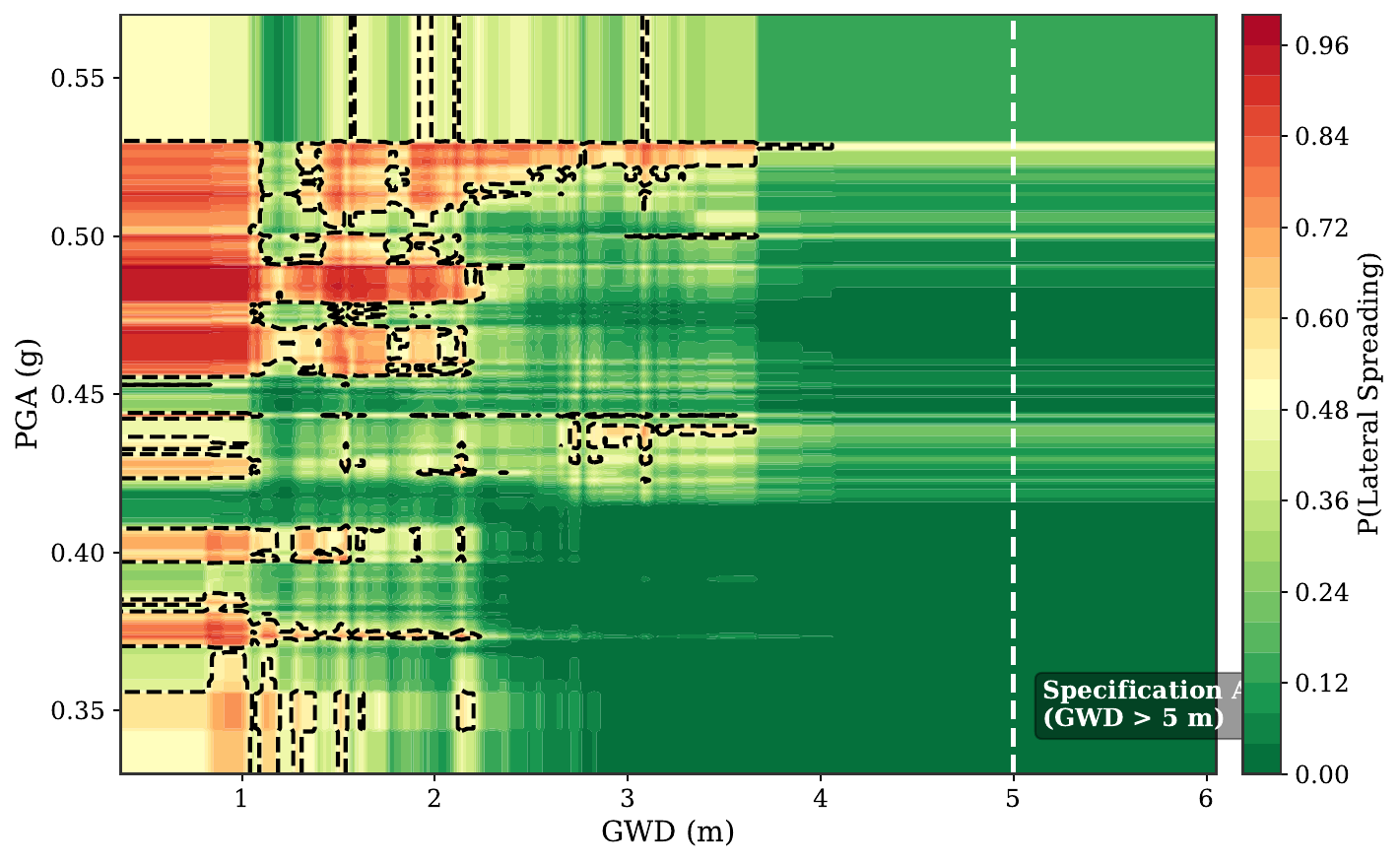}
    \caption{The resulting input space partition over $g$ and $p$ (with $\ell = 0.5$~km, $s = 2$\%), colored by predicted lateral spreading probability. Each colored region corresponds to a unique leaf combination across all 100 trees. The dashed line marks the Specification~A threshold ($g > 5$~m). The SMT solver reasons over all regions simultaneously.}
    \label{fig:tree-encoding-b}
\end{subfigure}
\caption{SMT encoding of an XGBoost gradient-boosted ensemble. Each of the 100 trees~(a) is encoded as a nested If-Then-Else expression over real-valued leaf weights; the ensemble sum of all trees partitions the input space into regions of constant predicted probability~(b).}
\label{fig:tree-encoding}
\end{figure}

Two implementation details are critical.
First, the base score $\beta_0$ is stored separately from the tree dump and must be included.
Omitting it shifts every logit by a constant offset.
Second, XGBoost uses 32-bit floats while SMT solvers use exact arithmetic, so we cast all thresholds and input features to \texttt{float32} to prevent precision mismatches.
Every counterexample was validated against the actual XGBoost model to confirm exact logit agreement.

To verify a specification, we assert its \emph{negation}.
Each specification has the form $\forall\,\mathbf{x} \in \mathcal{X}\!:\, P(\mathbf{x})$.
By the logical equivalence $\neg(\forall\,\mathbf{x}\!:\, P(\mathbf{x})) \;\equiv\; \exists\,\mathbf{x}\!:\, \neg P(\mathbf{x})$, ``it is not the case that $P$ holds for all $\mathbf{x}$'' is the same as ``there exists some $\mathbf{x}$ where $P$ fails.''
A specification is therefore violated if and only if such a counterexample exists.
The solver does not enumerate inputs.
Instead, for Specification~A (``for all inputs with GWD $> 5$~m, the model should not predict spreading''), the solver asks whether there exists an input where GWD $> 5$~m \emph{and} the model predicts spreading.
For Specification~C, the negated query is
\begin{equation}
\exists\, (g,\ell,s,p) \in \mathcal{X}: \quad \ell > 2.5 \;\land\; p < 0.35 \;\land\; f(g,\ell,s,p) > 0
\label{eq:verify-c}
\end{equation}
After inlining $f$ as an ITE expression, this becomes a quantifier-free QF\_LRA formula over four free real variables with domain constraints.
If the solver returns \texttt{sat}, the satisfying assignment (input) is a counterexample and the specification is violated.
If \texttt{unsat}, no such input exists and the specification holds universally.

%% ============================================================
\section{Methodology}
\label{sec:methodology}
%% ============================================================

%% ------------------------------------------------------------
\subsection{Tree-to-SMT encoding}
\label{sec:tree-encoding}
%% ------------------------------------------------------------

We parse XGBoost's JSON tree dump, which represents each tree as a nested dictionary of internal nodes (with split feature, threshold, and child pointers) and leaf nodes (with weight values).
\Cref{alg:encode} describes the recursive encoding.

\begin{algorithm}[htbp]
\caption{Recursive encoding of a decision tree as an SMT expression.}
\label{alg:encode}
\begin{algorithmic}[1]
\Function{EncodeTree}{node, features}
    \If{node is leaf}
        \State \Return $\texttt{RealVal}(\texttt{float32}(\text{node.weight}))$
    \EndIf
    \State $j \gets \text{node.split\_feature}$
    \State $\theta \gets \texttt{float32}(\text{node.threshold})$ \Comment{Match XGBoost arithmetic}
    \State $\text{left} \gets \Call{EncodeTree}{\text{node.yes\_child}, \text{features}}$
    \State $\text{right} \gets \Call{EncodeTree}{\text{node.no\_child}, \text{features}}$
    \State \Return $\texttt{If}(\text{features}[j] < \theta,\; \text{left},\; \text{right})$
\EndFunction
\end{algorithmic}
\end{algorithm}

For an ensemble of $T$ trees, the logit expression is
\begin{equation}
\texttt{logit} = \beta_0 + \sum_{t=1}^{T} \Call{EncodeTree}{\text{tree}_t.\text{root}, \text{features}}
\end{equation}

The resulting expression is a deeply nested ITE term.
A model with 100 trees of depth~5 produces up to $100 \times 2^5 = 3{,}200$ leaf terms (2{,}380 in our model), though the solver's internal simplification reduces this substantially before solving.

%% ------------------------------------------------------------
\subsection{EBM-to-SMT encoding}
\label{sec:ebm-encoding}
%% ------------------------------------------------------------

EBMs decompose the prediction into univariate shape functions and pairwise interactions, each stored as a lookup table over learned bin edges.
Each univariate function becomes a chain of ITE expressions over its bin edges:
\begin{equation}
f_i(x_i) = \texttt{If}(x_i < b_1,\; s_1,\; \texttt{If}(x_i < b_2,\; s_2,\; \ldots\; s_{n_i}))
\label{eq:ebm-ite}
\end{equation}
Adjacent bins with identical scores are merged, compressing 4{,}043 univariate bins to approximately 2{,}900 ITE nodes.
Pairwise interactions become nested two-dimensional ITE expressions (approximately 23{,}400 nodes for six interaction terms).
The full EBM logit is $f(\mathbf{x}) = \beta_0 + \sum_{i} f_i(x_i) + \sum_{i<j} f_{ij}(x_i, x_j)$.
We validated the encoding against EBM predictions on 20 test samples, achieving agreement to $< 10^{-12}$.

%% ------------------------------------------------------------
\subsection{How SMT verification works: a worked example}
\label{sec:smt-how}
%% ------------------------------------------------------------

We illustrate by checking Specification~A against a simplified two-tree ensemble.
Suppose tree~1 splits on $g$ at 2.5~m, then on $p$ at 0.42~g, and tree~2 splits on $\ell$ at 1.0~km:
\begin{align}
h_1 &= \texttt{If}(g < 2.5, \;\texttt{If}(p < 0.42, \; 0.3, \; 0.5), \;\texttt{If}(p < 0.42, \; -0.1, \; 0.05)) \\
h_2 &= \texttt{If}(\ell < 1.0, \; 0.2, \; -0.3)
\end{align}
The negated Specification~A query asks whether there exists $(g,\ell,s,p) \in \mathcal{X}$ such that $g > 5 \;\land\; (h_1 + h_2) > 0$.
The solver propagates $g > 5$ (and therefore $g \geq 2.5$) into tree~1's right branch, yielding either $-0.1$ or $0.05$.
If tree~1 outputs $0.05$ ($p \geq 0.42$) and tree~2 outputs $0.2$ ($\ell < 1.0$), the logit is $0.25 > 0$, so the solver returns \texttt{sat} with a counterexample (e.g., $g = 5.5$, $\ell = 0.5$, $p = 0.45$).

The solver reasons symbolically over the entire piecewise-constant structure, checking every region simultaneously via constraint propagation.
For our 100-tree model, each specification check typically completes in under 5~seconds.

%% ------------------------------------------------------------
\subsection{Verification procedure}
\label{sec:verify-proc}
%% ------------------------------------------------------------

For Specifications~A, C, and D, the solver checks a single query by asserting the specification's negation along with physical range constraints, and returns \texttt{sat} with a validated counterexample or \texttt{unsat} (specification proven).
Specification~B (monotonicity) is more expensive because it compares the model at two inputs differing only in PGA.
A direct encoding declares two feature vectors, constrains non-PGA features to be equal, and asserts that higher PGA produces lower logit, doubling the formula size and requiring 480~seconds for the monotone model.
We resolve this cost by leveraging the additive ensemble structure.

\begin{proposition}[Additive decomposition for monotonicity]\label{prop:decomp}
Let $f(\mathbf{x}) = \beta_0 + \sum_{m=1}^{M} g_m(\mathbf{x})$ be an additive model.
If each component $g_m$ is monotonically non-decreasing in feature $x_j$, then $f$ is monotonically non-decreasing in $x_j$.
\end{proposition}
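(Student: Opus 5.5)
The plan is to prove \Cref{prop:decomp} directly from the definition of monotonicity, exactly as it appears in Specification~B. Fix two inputs $\mathbf{x}$ and $\mathbf{x}'$ in $\mathcal{X}$ that agree in every coordinate except the $j$-th, with $x_j \leq x'_j$. The goal is $f(\mathbf{x}) \leq f(\mathbf{x}')$.

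By hypothesis each component satisfies $g_m(\mathbf{x}) \leq g_m(\mathbf{x}')$ for $m = 1,\dots,M$. Summing these $M$ inequalities and adding the constant $\beta_0$ to both sides gives
\[
f(\mathbf{x}) = \beta_0 + \sum_{m=1}^{M} g_m(\mathbf{x}) \;\leq\; \beta_0 + \sum_{m=1}^{M} g_m(\mathbf{x}') = f(\mathbf{x}').
\]
This uses only that addition of reals preserves $\leq$ and that $\beta_0$ is the same on both sides.

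The only point needing care is the meaning of ``non-decreasing in $x_j$'' for a component $g_m$. For an XGBoost tree $h_t$ or an EBM pairwise term $f_{ij}$, a component may also depend on other features. Its monotonicity must therefore be read as holding \emph{for every fixed value of the remaining coordinates}, not just along a single slice. That is exactly the form in which we invoke it, because $\mathbf{x}$ and $\mathbf{x}'$ share all other coordinates.

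The degenerate cases need no separate treatment. A component that does not depend on $x_j$ at all, such as an EBM shape function $f_i$ with $i \neq j$ or a tree with no split on $x_j$, is trivially non-decreasing in $x_j$. Piecewise constancy of the components plays no role in the argument.

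I expect no real obstacle in the proof itself. The substantive remark, which I would state afterward, is that the converse fails: $f$ can be monotone while some individual $g_m$ is not. The decomposition therefore gives a \emph{sufficient} certificate, in the sense that per-component \texttt{unsat} results for Specification~B imply global \texttt{unsat}. A component-level violation does not by itself refute Specification~B, and must be confirmed on the full model.
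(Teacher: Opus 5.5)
Your proof is correct and follows essentially the same argument as the paper's: fix $\mathbf{x},\mathbf{x}'$ differing only in $x_j$, apply the hypothesis to each $g_m$, sum the inequalities, and add $\beta_0$. Your extra remarks are also accurate and consistent with how the paper uses the proposition for its per-tree Specification~B checks: component monotonicity must hold for every fixed value of the other coordinates, and since the converse fails, per-component \texttt{unsat} results are only a sufficient certificate.
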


\begin{proof}
Fix inputs $\mathbf{x}, \mathbf{x}'$ differing only in $x_j$ with $x_j < x_j'$.
By hypothesis, $g_m(\mathbf{x}) \leq g_m(\mathbf{x}')$ for each $m = 1, \ldots, M$.
Summing gives $\sum_m g_m(\mathbf{x}) \leq \sum_m g_m(\mathbf{x}')$; adding $\beta_0$ to both sides gives $f(\mathbf{x}) \leq f(\mathbf{x}')$.
\end{proof}

\Cref{prop:decomp} reduces ensemble monotonicity to $T$ independent per-tree queries.
We verify Specification~B by checking each of the 100 trees individually. Each query involves a single tree rather than the full ensemble, and every query completes in under 1~second.
All 100 per-tree checks return \texttt{unsat}, proving Specification~B for the full ensemble by \Cref{prop:decomp}.
This per-tree strategy yields a complete formal proof, not an empirical approximation.

%% ------------------------------------------------------------
\subsection{Verify-fix-verify loop}
\label{sec:loop}
%% ------------------------------------------------------------

Verification results guide iterative model improvements:
(1)~train an unconstrained model (82.5\% accuracy);
(2)~verify all specifications (all four violated);
(3)~retrain with \texttt{monotone\_constraints} (GWD$-$, $L-$, PGA$+$);
(4)~re-verify (Specification~B proven via per-tree decomposition, but Specifications~A, C, and D still violated at 69.3\% accuracy).
The loop reveals that monotone constraints are necessary but not sufficient; counterexamples pinpoint where compound violations persist.

%% ------------------------------------------------------------
\subsection{Formal abductive explanations}
\label{sec:abductive}
%% ------------------------------------------------------------

The SMT encoding also enables computing \emph{formal abductive explanations}, the minimal subset of features that, when fixed to a sample's values, guarantees the prediction regardless of how the remaining features vary.

For a sample $\mathbf{x}^*$ predicted as positive ($f(\mathbf{x}^*) > 0$), a subset $S \subseteq \{1, \ldots, d\}$ is a sufficient reason if
\begin{equation}
\forall\, \mathbf{x} \in \mathcal{X}: \Bigl(\bigwedge_{j \in S} x_j = x_j^*\Bigr) \;\Rightarrow\; f(\mathbf{x}) > 0
\end{equation}

We compute this iteratively, starting from the full feature set and testing whether removing each feature preserves the above property via SMT queries.
If removing feature $j$ yields \texttt{unsat} (meaning no counterexample exists even without fixing $j$), the feature is unnecessary and can be dropped.
If \texttt{sat}, some input differing only in $j$ changes the prediction, so $j$ is formally necessary.

Comparing these formal explanations with SHAP's top-$k$ features (where $k$ matches the formal explanation size) measures how faithfully SHAP represents the model's actual decision logic.

%% ============================================================
\section{Results}
\label{sec:results}
%% ============================================================

%% ------------------------------------------------------------
\subsection{Model training}
\label{sec:model-training}
%% ------------------------------------------------------------

We train four model families on the Christchurch dataset (80/20 train-test split, 5{,}832/1{,}459 sites):
(1)~unconstrained XGBoost,
(2)~monotone-constrained XGBoost (GWD$-$, $L-$, PGA$+$),
(3)~unconstrained EBM, and
(4)~monotone-constrained EBM.
For XGBoost, we vary depth $d \in \{3, 5, 7, 9\}$ and estimators $n \in \{20, 50, 100\}$ (24 variants); for EBMs, interaction terms $i \in \{0, 3, 5, 10\}$ (9 variants including the fully constrained model), yielding 33 total.

The baseline unconstrained XGBoost ($d=5, n=100$) achieves 82.5\% test accuracy; 10-fold cross-validation confirms stability ($83.2 \pm 1.4$\% unconstrained, $67.4 \pm 1.0$\% monotone).
The best unconstrained EBM ($i=10$) achieves 80.1\% with a smaller train-test gap, reflecting the GAM structure's inherent regularization.

The SHAP beeswarm plot for XGBoost (\Cref{fig:shap-ebm}, left) reveals that high PGA exhibits both positive and negative SHAP values, the non-physical pattern identified by \citet{hsiao-kumar-2024}.
EBM feature importance (\Cref{fig:shap-ebm}, right) flags a similar concern, but neither visualization can quantify how extensively the model violates physical expectations across its full input domain.

\begin{figure}[htbp]
\centering
\includegraphics[width=\textwidth]{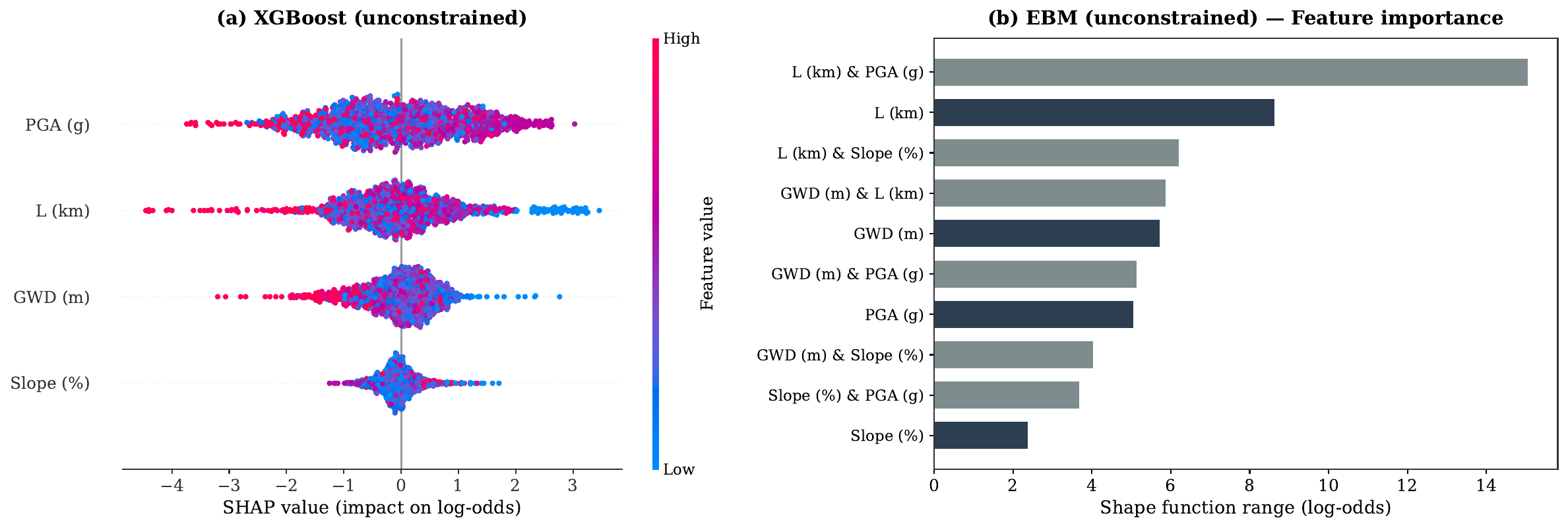}
\caption{Global feature importance. Left: SHAP beeswarm plot for unconstrained XGBoost. Each dot represents a test sample; color indicates feature value (red = high, blue = low). PGA shows both positive and negative SHAP values, indicating non-monotonic behavior. Right: EBM feature importance measured by shape function range (maximum minus minimum log-odds contribution).}
\label{fig:shap-ebm}
\end{figure}

The EBM shape functions (\Cref{fig:ebm-shapes}) reveal the source of these violations.
The unconstrained PGA curve exhibits a dip beyond 0.45~g where increasing PGA decreases predicted log-odds, and the PGA-monotone constraint eliminates this artifact.

\begin{figure}[htbp]
\centering
\includegraphics[width=\textwidth]{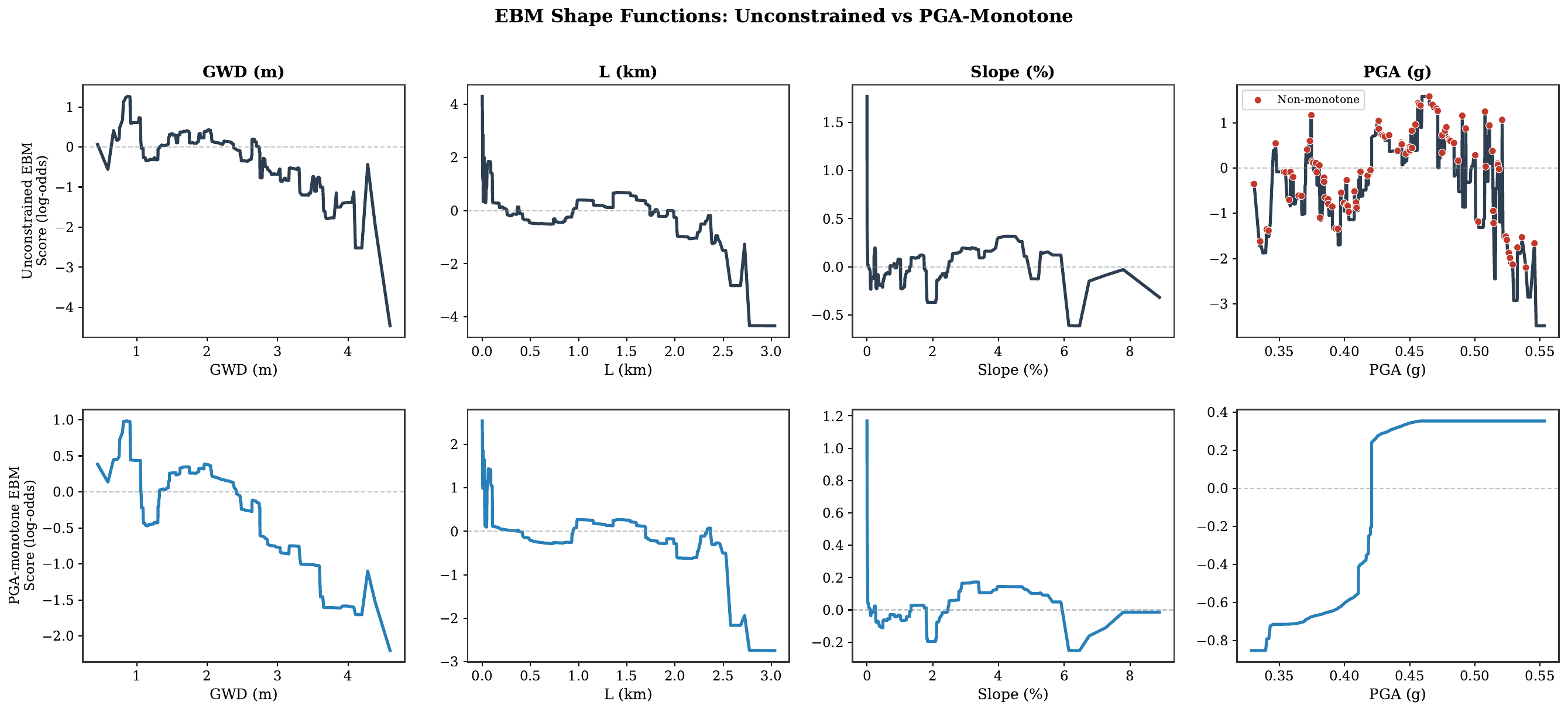}
\caption{EBM shape functions for all four features. Top row: unconstrained EBM. Bottom row: PGA-monotone EBM. The PGA shape function (rightmost column) shows non-physical behavior in the unconstrained model, specifically a dip near 0.42~g where increasing PGA decreases predicted log-odds. The monotone constraint eliminates this artifact.}
\label{fig:ebm-shapes}
\end{figure}

%% ------------------------------------------------------------
\subsection{Specification verification results}
\label{sec:axiom-results}
%% ------------------------------------------------------------

We verified all four specifications against each model family (\Cref{tab:axiom-results}); the following paragraphs discuss each model in detail, with counterexample diagnostics in \Cref{fig:axiom-violations}.

\begin{table}[htbp]
\centering
\caption{Specification verification results. XGBoost results are full SMT proofs (Z3). EBM results use SMT for Specifications~A, C, and D; Specification~B uses per-term score-array decomposition (\Cref{prop:decomp}). The solver checks the \emph{negation} of each specification: if satisfiable, a counterexample exists and the specification is \textbf{violated} (\ding{55}); if unsatisfiable, no counterexample exists and the specification is \textbf{proven} (\ding{51}). All counterexamples are validated against the actual model.}
\label{tab:axiom-results}
\begin{tabular}{lccccl}
\toprule
Model & A (Water table) & B (PGA mono.) & C (Distance) & D (Flat-ground) & Test Acc. \\
\midrule
XGBoost (unconstrained)     & \ding{55} & \ding{55} & \ding{55} & \ding{55} & 82.5\% \\
XGBoost (monotone)          & \ding{55} & \ding{51} & \ding{55} & \ding{55} & 69.3\% \\
EBM (unconstrained)         & \ding{55} & \ding{55} & \ding{55} & \ding{55} & 80.1\% \\
EBM (PGA monotone)          & \ding{55} & \ding{51} & \ding{55} & \ding{55} & 70.5\% \\
EBM (fully constrained)     & \ding{55} & \ding{51} & \ding{51} & \ding{51} & 67.2\% \\
\bottomrule
\end{tabular}
\end{table}

\begin{figure}[htbp]
\centering
\includegraphics[width=\textwidth]{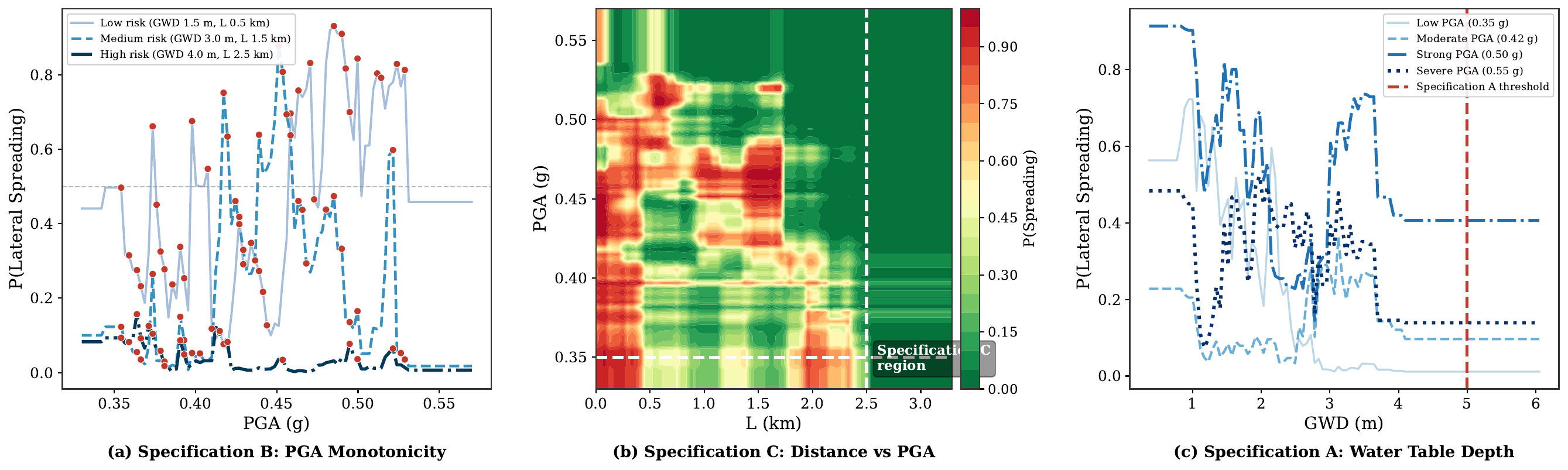}
\caption{Specification violation diagnostics for unconstrained XGBoost. Left: PGA monotonicity (Specification~B) for three soil profiles; red dots mark non-monotone regions. Center: contour of predicted spreading probability over distance ($L$) and PGA, with Specification~C safe region marked. Right: predicted probability versus groundwater depth for several PGA levels, with Specification~A threshold at 5~m.}
\label{fig:axiom-violations}
\end{figure}

\paragraph{Unconstrained XGBoost}
The solver returns \texttt{sat} for all four specifications (each within 1~second).
The Specification~A counterexample (GWD $= 5.001$~m, logit $= +0.065$, 52\%) predicts spreading despite deep groundwater.
The Specification~B counterexample shows increasing PGA from 0.42~g to 0.52~g \emph{decreasing} the logit by 0.15~units, confirming the SHAP-based observation of \citet{hsiao-kumar-2024} by exhaustive formal proof.
The Specification~C counterexample ($L = 2.501$~km, PGA $= 0.33$~g, logit $= +0.169$, 54\%) and Specification~D counterexample (Slope $= 0.05$\%, $L = 2.501$~km, logit $= +0.169$, 54\%) predict spreading in regions where the adopted screening specifications exclude the usual driving conditions and the training data contain no observed spreading cases (\Cref{fig:extrapolation}b).

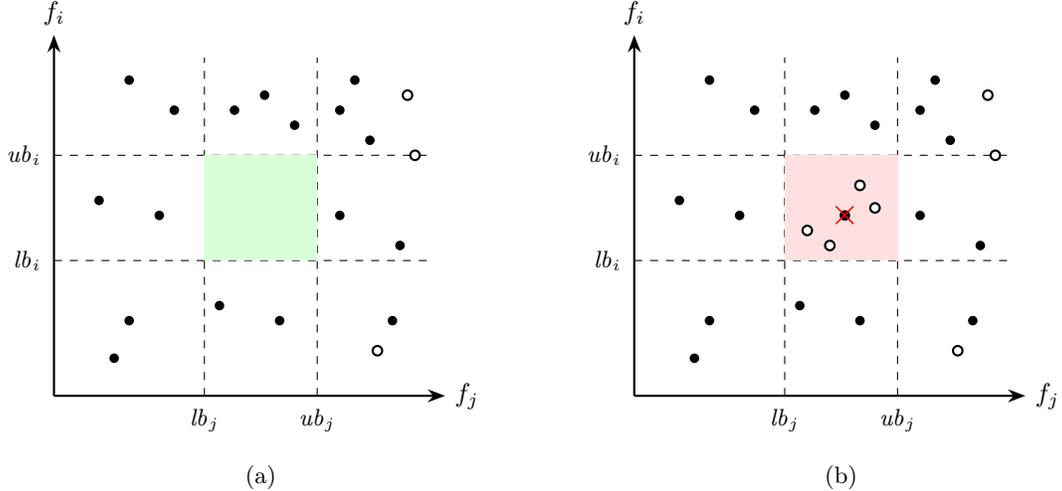
\begin{figure}[htbp]
\centering
\begin{tikzpicture}[>=Stealth]
  % --- Panel (a) ---
  % Axes
  \draw[->, thick] (0,0) -- (5.2,0) node[right] {$f_j$};
  \draw[->, thick] (0,0) -- (0,4.8) node[above] {$f_i$};

  % Dashed threshold lines
  \draw[dashed] (2.0, 0) -- (2.0, 4.5);
  \draw[dashed] (3.5, 0) -- (3.5, 4.5);
  \draw[dashed] (0, 1.8) -- (5.0, 1.8);
  \draw[dashed] (0, 3.2) -- (5.0, 3.2);

  % Threshold labels
  \node[below, font=\small] at (2.0, -0.05) {$\mathit{lb}_j$};
  \node[below, font=\small] at (3.5, -0.05) {$\mathit{ub}_j$};
  \node[left, font=\small] at (-0.05, 1.8) {$\mathit{lb}_i$};
  \node[left, font=\small] at (-0.05, 3.2) {$\mathit{ub}_i$};

  % Green shaded box (data gap)
  \fill[green!15] (2.0, 1.8) rectangle (3.5, 3.2);

  % Data points (all outside the green box)
  % Filled = positive class
  \foreach \x/\y in {
    1.0/4.2, 1.6/3.8, 2.4/3.8, 2.8/4.0, 3.2/3.6,
    0.6/2.6, 1.4/2.4, 2.2/1.2, 3.0/1.0, 1.0/1.0,
    3.8/2.4, 4.2/3.4, 4.6/2.0, 4.0/4.2, 4.5/1.0,
    3.8/3.8, 0.8/0.5} {
    \fill[black] (\x, \y) circle (1.8pt);
  }
  % Open = negative class
  \foreach \x/\y in {4.8/3.2, 4.3/0.6, 4.7/4.0} {
    \draw[black, fill=white, thick] (\x, \y) circle (1.8pt);
  }

  \node[font=\small, anchor=north] at (2.75, -0.8) {(a)};
\end{tikzpicture}%
\hspace{12mm}%
\begin{tikzpicture}[>=Stealth]
  % --- Panel (b) ---
  % Axes
  \draw[->, thick] (0,0) -- (5.2,0) node[right] {$f_j$};
  \draw[->, thick] (0,0) -- (0,4.8) node[above] {$f_i$};

  % Dashed threshold lines
  \draw[dashed] (2.0, 0) -- (2.0, 4.5);
  \draw[dashed] (3.5, 0) -- (3.5, 4.5);
  \draw[dashed] (0, 1.8) -- (5.0, 1.8);
  \draw[dashed] (0, 3.2) -- (5.0, 3.2);

  % Threshold labels
  \node[below, font=\small] at (2.0, -0.05) {$\mathit{lb}_j$};
  \node[below, font=\small] at (3.5, -0.05) {$\mathit{ub}_j$};
  \node[left, font=\small] at (-0.05, 1.8) {$\mathit{lb}_i$};
  \node[left, font=\small] at (-0.05, 3.2) {$\mathit{ub}_i$};

  % Red shaded box (contradicted evidence)
  \fill[red!12] (2.0, 1.8) rectangle (3.5, 3.2);

  % Data points outside the red box
  % Filled = positive class
  \foreach \x/\y in {
    1.0/4.2, 1.6/3.8, 2.4/3.8, 2.8/4.0, 3.2/3.6,
    0.6/2.6, 1.4/2.4, 2.2/1.2, 3.0/1.0, 1.0/1.0,
    3.8/2.4, 4.2/3.4, 4.6/2.0, 4.0/4.2, 4.5/1.0,
    3.8/3.8, 0.8/0.5} {
    \fill[black] (\x, \y) circle (1.8pt);
  }
  % Open = negative class outside
  \foreach \x/\y in {4.8/3.2, 4.3/0.6, 4.7/4.0} {
    \draw[black, fill=white, thick] (\x, \y) circle (1.8pt);
  }

  % Open circles inside the red box (negative class only)
  \foreach \x/\y in {2.3/2.2, 3.0/2.8, 2.6/2.0, 3.2/2.5} {
    \draw[black, fill=white, thick] (\x, \y) circle (1.8pt);
  }

  % Counterexample (tick / cross mark)
  \fill[black] (2.8, 2.4) circle (2pt);
  \node[red, font=\Large] at (2.8, 2.4) {$\times$};

  \node[font=\small, anchor=north] at (2.75, -0.8) {(b)};
\end{tikzpicture}
\caption{Schematic of two failure modes in a two-dimensional feature subspace defined by bounds $\mathit{lb}$ and $\mathit{ub}$.
(a)~No training data fall in the green region; the model interpolates without empirical support.
(b)~Training observations exist in the red region but are exclusively negative-class ($\circ$), yet the model predicts $y = 1$ at the SMT counterexample ($\times$).
Specification~D violations correspond to case~(b).}
\label{fig:extrapolation}
\end{figure}

\paragraph{Monotone-constrained XGBoost}
Monotone constraints drop accuracy to 69.3\% ($-$13.2~pp).
The Specification~A counterexample (GWD $= 5.014$~m, $L = 0.011$~km, PGA $= 0.33$~g, logit $= +0.221$) shows that per-feature constraints cannot encode the compound reasoning that deep groundwater should override proximity and shaking.
Specification~B is proven in two independent ways, per-tree decomposition (100 trees, 0.6~s total) and direct dual-ensemble verification (\texttt{unsat} in 480~s), but Specifications~C and D still return \texttt{sat}, confirming that monotone constraints fix single-feature violations while leaving compound specifications unsatisfied.

\paragraph{EBM variants}
The unconstrained EBM violates all four specifications.
Specification~A is violated (GWD $= 5.000$~m, $L = 0.005$~km, logit $= +1.68$, 84\%), despite the EBM's additive structure, because the combined positive contributions from proximity ($L$), slope, and PGA outweigh the GWD shape function's negative contribution at deep water table depths.
Specification~B is violated (272 non-monotone segments in the PGA shape function).
Specification~C is violated ($L = 2.52$~km, PGA $= 0.33$~g, logit $= +0.13$, 53\%); this query requires the specialized QF\_LRA solver and completes in 1{,}331~s.
Specification~D is violated (Slope $= 0.001$\%, $L = 2.72$~km, logit $= +0.03$, 51\%).

The PGA-monotone EBM proves Specification~B via per-term decomposition but still violates Specification~A (GWD $= 5.00$~m, $L = 0.005$~km, logit $= +0.14$, 53\%), with accuracy dropping 9.6~pp (80.1\% $\rightarrow$ 70.5\%).

\paragraph{Fully constrained EBM}
To test whether more specifications can be satisfied simultaneously, we train an EBM with monotone constraints on all four features (GWD decreasing, $L$ decreasing, Slope increasing, PGA increasing), enforcing the physical directions for every feature.
This model proves three of four specifications, namely Specification~B (PGA monotonicity, via per-term decomposition), Specification~C (distance safety), and Specification~D (flat-ground safety), the latter two proven by the SMT solver returning \texttt{unsat}.
However, Specification~A (water table depth) is violated by a thin margin; the solver finds a counterexample at GWD $= 5.003$~m, $L = 0.003$~km, Slope $= 2.5$\%, PGA $= 0.46$~g, with a predicted probability of 50.3\% (logit $= +0.012$).
Even with monotonicity constraints on all four features, the model cannot guarantee Specification~A because monotonicity enforces the \emph{direction} of a feature's effect, not an absolute threshold on the output.
The accuracy cost of full constraint is 12.9 percentage points (80.1\% $\rightarrow$ 67.2\%), but the model achieves the highest specification compliance of any variant tested.
\Cref{fig:pga-mono} illustrates the effect. For a fixed soil profile, unconstrained XGBoost and EBM exhibit clear non-monotone PGA regions, while the constrained variants produce strictly increasing curves.

\begin{figure}[htbp]
\centering
\includegraphics[width=\textwidth]{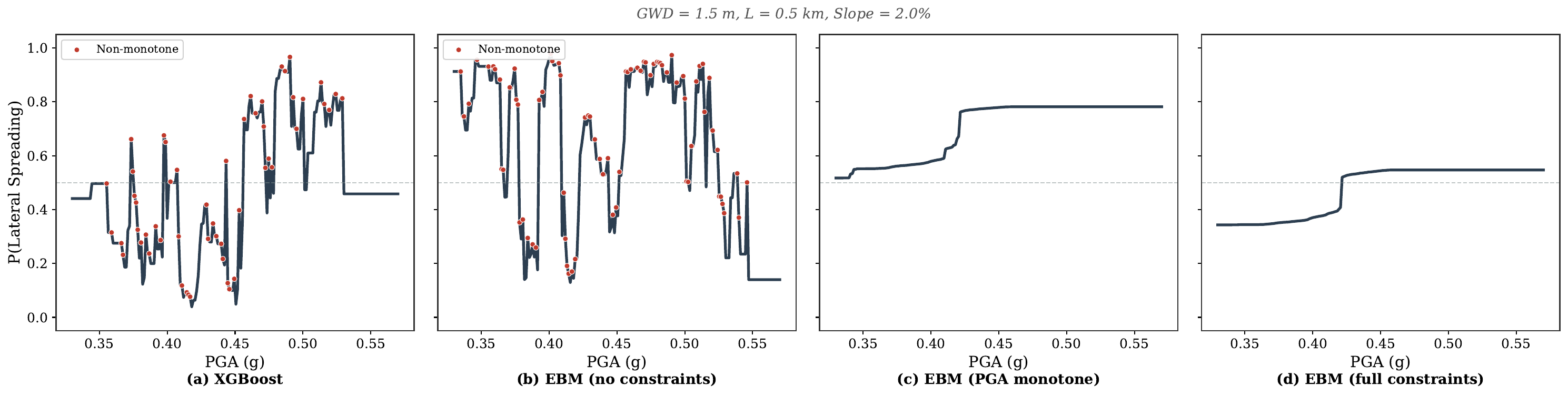}
\caption{PGA monotonicity comparison across model families for a fixed soil profile (GWD = 1.5~m, $L$ = 0.5~km, Slope = 2.0\%). Red dots mark non-monotone regions where increasing PGA decreases predicted probability. Unconstrained XGBoost and EBM exhibit violations; monotone-constrained variants do not.}
\label{fig:pga-mono}
\end{figure}

%% ------------------------------------------------------------
\subsection{Pareto frontier: accuracy versus physical consistency}
\label{sec:pareto}
%% ------------------------------------------------------------

To quantify the accuracy cost of physical consistency, we trained all 33 model variants and measured Specification~B compliance as the fraction of 120 empirically sampled PGA-sweep profiles satisfying monotonicity (\Cref{fig:pareto}).
This empirical proxy enables broad hyperparameter comparison; formal SMT proofs (\Cref{tab:axiom-results}) are applied to representative models in each family.

\begin{figure}[htbp]
\centering
\includegraphics[width=0.8\textwidth]{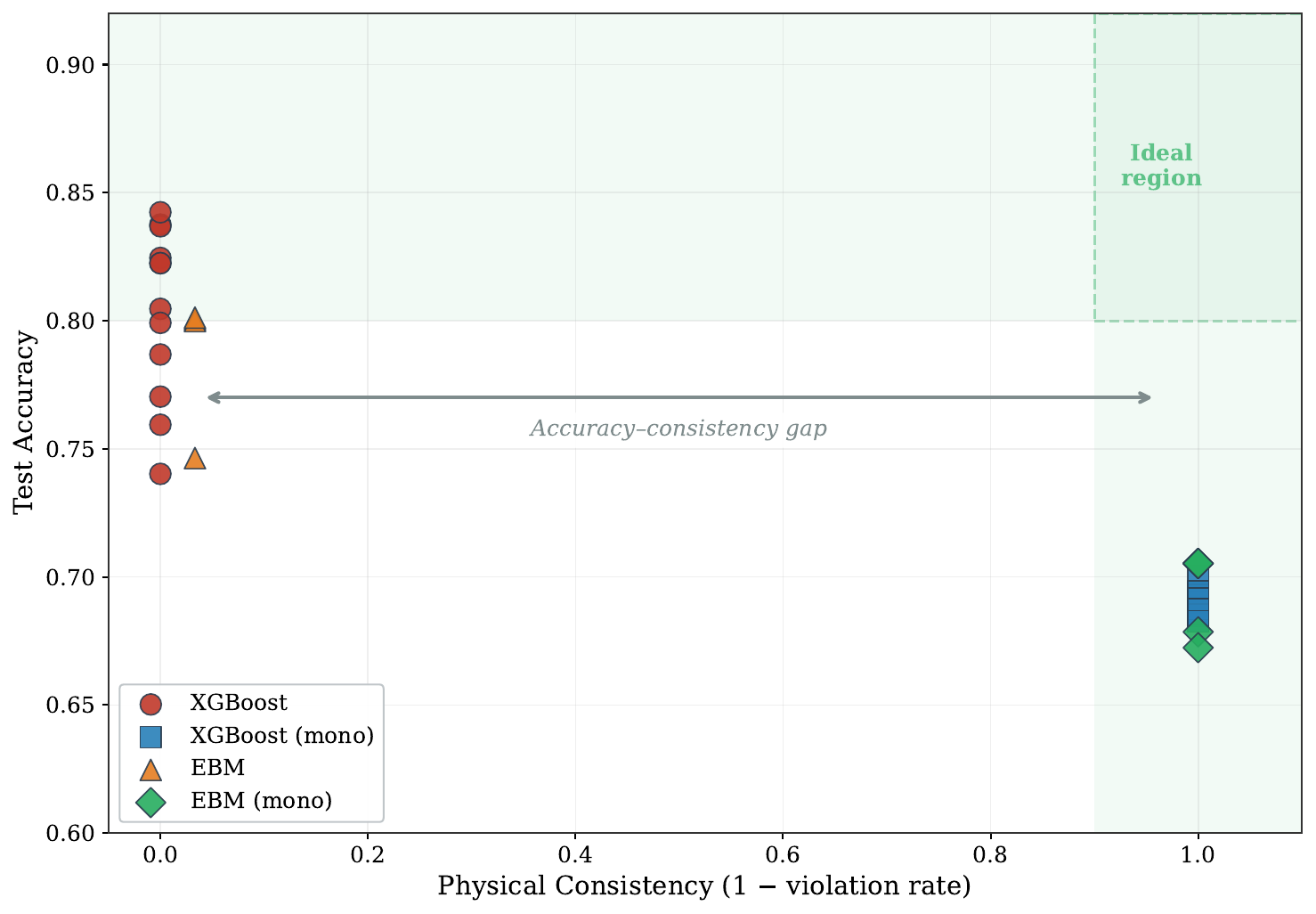}
\caption{Accuracy versus physical consistency (Specification~B compliance) for 33 model variants. Unconstrained models cluster at high accuracy with zero compliance (upper-left); constrained models cluster at lower accuracy with full compliance (lower-right). The fully constrained EBM (lowest green diamond, 67.2\%) achieves full Specification~B compliance and proves 3 of 4 specifications via SMT. No model occupies the ideal top-right region.}
\label{fig:pareto}
\end{figure}

Unconstrained models cluster at high accuracy (74--84\%) with near-zero compliance; constrained models cluster at lower accuracy (68--71\%) with full compliance.
No model occupies the top-right region where accuracy exceeds 80\% and compliance reaches 100\%, consistent with unconstrained models fitting patterns that violate the specifications.
\Cref{tab:pareto-summary} confirms this pattern across all 33 variants. The 10--15 percentage point accuracy gap persists regardless of model family or hyperparameter setting.

\begin{table}[htbp]
\centering
\caption{Summary statistics for the Pareto frontier, aggregated across hyperparameter variants. Violations and compliance refer to Specification~B (PGA monotonicity) only, measured on 120 empirically sampled PGA-sweep profiles. Full specification verification results (Specifications~A--D) are in \Cref{tab:axiom-results}.}
\label{tab:pareto-summary}
\begin{tabular}{lcccc}
\toprule
Model family & Variants & Acc. range & B violations & B compliance \\
\midrule
XGBoost (unconstrained) & 12 & 74.0--84.2\% & 120/120 & 0\% \\
XGBoost (monotone)      & 12 & 68.3--70.0\% & 0/120  $\checkmark$ & 100\% $\checkmark$\\
EBM (unconstrained)     & 4  & 74.6--80.1\% & 116/120 & 3.3\% \\
EBM (monotone)          & 4  & 67.9--70.5\% & 0/120 $\checkmark$   & 100\% $\checkmark$\\
EBM (fully constrained) & 1  & 67.2\%       & 0/120 $\checkmark$   & 100\% $\checkmark$\\
\bottomrule
\end{tabular}
\end{table}

%% ------------------------------------------------------------
\subsection{SHAP versus formal explanations}
\label{sec:shap-formal}
%% ------------------------------------------------------------

SHAP explains \emph{how} a model arrives at a prediction by decomposing it into per-feature contributions.
Formal verification answers a different question, namely \emph{does} the prediction comply with physical specifications?
We show that these two questions can give contradictory signals.

\Cref{fig:shap-formal-cases} illustrates this gap using a Specification~A counterexample.
The input has GWD $= 5.65$~m, $L = 1.21$~km, Slope $= 4.86$\%, PGA $= 0.51$~g.
The model predicts spreading with 91\% confidence (logit $= +2.31$), violating Specification~A (GWD $> 5$~m should not predict spreading).
Panel~(a) shows the standard SHAP waterfall that an engineer would inspect.
$L$ contributes $+1.67$, PGA $+0.61$, Slope $+0.52$, and GWD only $-0.12$.
An engineer would read this as ``close to a free face, strong shaking, steep slope'' and find the prediction reasonable.
GWD, the feature that makes this prediction a specification violation, is ranked last by SHAP with the smallest magnitude.

\begin{figure}[t]
\centering
\includegraphics[width=\textwidth]{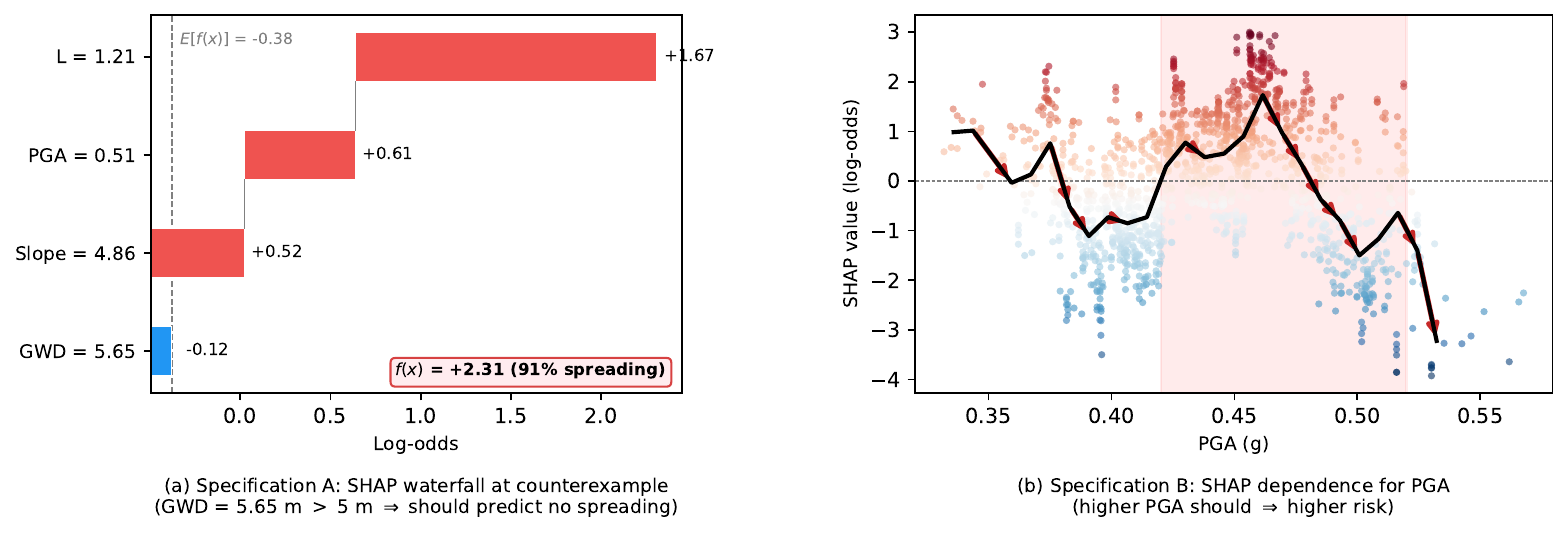}
\caption{SHAP analysis of specification violations (unconstrained XGBoost). (a)~Waterfall at a Specification~A counterexample (GWD $= 5.65$~m, predicted 91\% spreading). Red bars push toward spreading, blue bars push against it. GWD is the feature that violates Specification~A, yet SHAP ranks it last (\#4) with the smallest magnitude ($-0.12$). (b)~SHAP dependence plot for PGA. Each dot is one test sample; color indicates the SHAP value (red = pushes toward spreading, blue = pushes against). The black line (binned mean) should increase monotonically if the model respects Specification~B; instead, it drops sharply between 0.42--0.52~g (shaded region, red arrows), confirming the non-monotonicity that formal verification detects.}
\label{fig:shap-formal-cases}
\end{figure}

The underlying issue is that SHAP decomposes predictions, not validates them.
SHAP correctly reports that GWD contributes $-0.12$ log-odds at this input, but it ranks  the GWD feature last, indicating its minimal impact. 
In contrast, formal verification proves that any input with GWD $> 5$~m violates Specification~A, regardless of the contributions from other features.
While SHAP explains the model's reasoning for a specific input, it cannot detect that the model's overall logic violations against a specification.
The same pattern appears for Specification~B (\Cref{fig:shap-formal-cases}b).
The SHAP dependence plot for PGA shows that the model's SHAP values drop sharply between 0.42--0.52~g instead of increasing monotonically.
While the global dependence plot makes this visible, individual waterfall plots would not detect the violation, because each waterfall appears locally reasonable.
Formal verification detects the violation automatically and returns a concrete counterexample.

%% ------------------------------------------------------------
\subsection{Counterexample analysis}
\label{sec:counterexamples}
%% ------------------------------------------------------------

Each SMT counterexample is a concrete input vector that the solver constructs to violate the specification.
Validating each counterexample against the actual model confirms the encoding's fidelity; \Cref{tab:counterexamples} lists representative counterexamples for the unconstrained XGBoost model.

\begin{table}[htbp]
\centering
\caption{Representative counterexamples from SMT verification of unconstrained XGBoost. A positive logit (probability $> 50$\%) means the model predicts spreading, violating the specification. All counterexamples were validated against the actual XGBoost model to machine precision.}
\label{tab:counterexamples}
\begin{tabular}{lcccccc}
\toprule
Spec. & GWD (m) & $L$ (km) & Slope (\%) & PGA (g) & Logit & Probability \\
\midrule
A (water table)      & 5.001 & 1.35 & 1.58 & 0.37 & +0.065 & 52\% \\
C (distance safety)  & 1.73 & 2.501 & 0.05 & 0.33 & +0.169 & 54\% \\
D (flat-ground)      & 1.72 & 2.501 & 0.05 & 0.33 & +0.169 & 54\% \\
\bottomrule
\end{tabular}
\end{table}

The Specification~A counterexample (GWD $= 5.001$~m) predicts spreading at 52\% probability because the model relies on proximity ($L = 1.35$~km) while insufficiently weighting groundwater depth.
The Specification~C counterexample ($L = 2.501$~km, PGA $= 0.33$~g) predicts spreading at 54\% in a region where the Christchurch data contain no observed spreading cases, because the ensemble accumulates positive contributions where training data is sparse.
Both counterexamples occur in data-sparse boundary regions, connecting to the root cause identified by \citet{hsiao-kumar-rathje-2025}.
Formal verification transforms the intuition that ``sparse data causes problems'' into a concrete, validated demonstration.

%% ============================================================
\section{Discussion and conclusions}
\label{sec:discussion}
%% ============================================================

\paragraph{Formal verification versus empirical testing}
Grid-based testing can detect violations but cannot certify their absence.
\Cref{tab:grid-vs-smt} shows that uniform grid search with up to $n = 200$ points per feature (1.6~billion evaluations) finds \emph{zero} Specification~C violations, while the SMT solver finds a counterexample in 0.19~s.
The violation region occupies a thin sliver ($L \in [2.500, 2.505]$~km, Slope~$< 0.05$\%) smaller than a single grid step.
When the solver returns \texttt{unsat}, the specification holds universally, a guarantee qualitatively different from any finite-sample test.

\begin{table}[htbp]
\centering
\caption{Grid search versus SMT verification for Specifications~C and D (unconstrained XGBoost). Grid search evaluates the model on $n^4$ uniformly spaced points over $\mathcal{X}$. The SMT solver encodes the model symbolically and reasons over the entire continuous domain. Specification~D adds a Slope $< 0.1$\% constraint, which further reduces the number of grid points in the premise region.}
\label{tab:grid-vs-smt}
\begin{tabular}{lrrrrr}
\toprule
Method & Points eval. & Premise (C) & Premise (D) & Time & Violated? \\
\midrule
Grid ($n = 30$)  & 810{,}000         & 18{,}900         & 630          & $<$0.1\,s & None found \\
Grid ($n = 50$)  & 6{,}250{,}000     & 150{,}000        & 3{,}000      & 0.1\,s    & None found \\
Grid ($n = 100$) & 100{,}000{,}000   & 2{,}160{,}000    & 21{,}600     & 1.7\,s    & None found \\
Grid ($n = 150$) & 506{,}250{,}000   & 10{,}530{,}000   & 140{,}400    & 17\,s     & None found \\
Grid ($n = 200$) & 1{,}600{,}000{,}000 & 32{,}640{,}000 & 326{,}400  & 184\,s    & None found \\
\midrule
SMT (C)          & \multicolumn{3}{c}{\emph{entire continuous domain}} & 0.19\,s & \textbf{Yes} \\
SMT (D)       & \multicolumn{3}{c}{\emph{entire continuous domain}}  &  0.7\,s & \textbf{Yes} \\
\bottomrule
\end{tabular}
\end{table}

\paragraph{Scalability}
XGBoost verification completes in under 1~s per specification for single-ensemble queries.
Per-tree decomposition reduces monotonicity proofs by orders of magnitude compared to the direct dual-ensemble encoding.
EBM verification is slower due to larger encodings; the most expensive query requires a specialized QF\_LRA solver.
Higher-dimensional models will require abstraction techniques \citep{varshney-2026,kantchelian-2016} to maintain practical runtimes.

\paragraph{Compound constraints and constraint interactions}
Monotone constraints enforce the direction of a feature's marginal effect but cannot guarantee threshold behavior (Specification~A) or multi-feature conjunctions (Specification~C).
Progressively adding constraints improves compliance, but some specifications resist full resolution because monotonicity enforces the \emph{direction} of a feature's effect, not an absolute threshold on the output.
Promising extensions include counterexample-guided data augmentation and post-hoc output clamping; this paper establishes the diagnostic prerequisite by identifying \emph{what} fails and \emph{where}.

\paragraph{Why not embed hard constraints and skip verification}
Available constraint mechanisms operate per-feature and cannot encode compound conditions (Specification~C) or threshold guarantees (Specification~A).
Our results confirm that even the most constrained model cannot satisfy all specifications.
SMT counterexamples pinpoint \emph{where} a model fails, providing actionable information that a pass/fail constraint flag cannot.
Formal verification therefore complements hard constraints.  Constraints restrict the hypothesis space, while verification determines whether the resulting trained model satisfies the specification.

\paragraph{Deterministic versus probabilistic guarantees}
The SMT solver proves that a specification holds for \emph{all} inputs or provides a single counterexample where it fails.
This deterministic guarantee applies to the \emph{model}, not to the physical system; it says nothing about whether $f$ correctly predicts the real-world outcome, which depends on data quality, feature sufficiency, and epistemic uncertainty.
A natural extension would combine formal verification with probabilistic calibration (e.g., conformal prediction), providing complementary guarantees of physical consistency and well-calibrated uncertainty.
Data-aware verification \citep{varshney-2026}, which constrains counterexamples to remain near the training distribution, offers a middle ground.

\paragraph{Specification selection and scope}
The four specifications exercise structurally different formula classes, including a single-feature threshold (Specification~A), monotonicity (Specification~B), and compound multi-feature screening conditions (Specifications~C and D).
This distinction matters because standard monotone constraints can directly enforce only Specification~B.
A production specification library would be richer, incorporating CPT-based susceptibility, depth-weighted liquefaction potential, or magnitude-duration effects, but each new specification becomes an independent SMT query with no framework changes.
This study is methodological. It demonstrates that SMT-based formal verification can check tree ensembles against geotechnical specifications, rather than establishing a new predictor or regional model.

\begin{table}[t]
\centering
\caption{Threshold sensitivity for Specification~A (water table depth) against unconstrained XGBoost. The model violates Specification~A at every GWD threshold.}
\label{tab:threshold-sensitivity}
\begin{tabular}{ccc}
\toprule
GWD threshold (m) & Result & Counterexample logit \\
\midrule
3.0 & Violated & $+0.131$ \\
3.5 & Violated & $+1.405$ \\
4.0 & Violated & $+0.062$ \\
4.5 & Violated & $+0.366$ \\
5.0 & Violated & $+1.405$ \\
5.5 & Violated & $+1.405$ \\
6.0 & Violated & $+0.045$ \\
\bottomrule
\end{tabular}
\end{table}

\paragraph{Limitations and future directions}
The specifications encode necessary conditions, not sufficient ones.
A compliant model could still learn non-physical patterns outside the specification set.
The approach applies only to tree-based models; neural network verification via MILP encodings \citep{kantchelian-2016} is possible but computationally more expensive.
The thresholds in Specifications~A and C are site-specific Christchurch bounds, not universal constants; applying the framework elsewhere requires recalibrating thresholds to local conditions.
\Cref{tab:threshold-sensitivity} demonstrates that the unconstrained model violates Specification~A across a range of thresholds, confirming a persistent pattern rather than a narrow edge case.
Extending the framework to regression models predicting displacement magnitudes is conceptually straightforward in SMT but requires reformulating specifications over continuous outputs.

In summary, SMT-based formal verification can detect physical inconsistencies in tree ensemble models that empirical testing and post-hoc explanations miss.
Training-time constraints fix targeted violations but cannot enforce compound or threshold specifications, and physical consistency comes at a quantifiable accuracy cost.
The verify-fix-verify loop (train for accuracy, verify against specifications, apply targeted constraints, re-verify) provides a systematic workflow for deploying physically consistent ML models in safety-critical geotechnical applications.
As ML models increasingly inform engineering decisions, formal verification offers a path toward certifying that these models respect established specifications before field deployment.

%% ============================================================
\section*{Data availability}
%% ============================================================

The Christchurch lateral spreading dataset is available from \citet{durante-rathje-2021}.
The verification code and trained models are available at \url{https://github.com/geoelements-dev/2026-formal-verify-liq}.

%% ============================================================
\bibliography{references}
\bibliographystyle{elsarticle-harv}

\end{document}